\begin{document}
%
\title{Reinforcement Learning with Parameterized Actions}
\author{Warwick Masson \and Pravesh Ranchod\\
School of Computer Science and Applied Mathematics\\
University of Witwatersrand\\
Johannesburg, South Africa\\
warwick.masson@students.wits.ac.za\\
pravesh.ranchod@wits.ac.za
\And
George Konidaris\\
Department of Computer Science\\
Duke University\\
Durham, North Carolina 27708\\
gdk@cs.duke.edu
}
\maketitle
\begin{abstract}
We introduce a model-free algorithm for learning in Markov decision processes
with parameterized actions---discrete actions with continuous parameters.
At each step the agent must select both which action to use and which parameters to use with that action.
We introduce the Q-PAMDP algorithm for learning in these domains,
show that it converges to a local optimum, and compare it to direct policy search in the goal-scoring and Platform domains.
\end{abstract}

\section{Introduction}

Reinforcement learning agents typically have either a discrete or a continuous action space \cite{sutton1998introduction}.
With a discrete action space, the agent decides which distinct action to perform from a finite action set.
With a continuous action space, actions are expressed as a single real-valued vector.
If we use a continuous action space, we lose the ability to consider differences in kind: all actions must be expressible as a single vector.
If we use only discrete actions, we lose the ability to finely tune action selection based on the current state.

A parameterized action is a discrete action parameterized by a real-valued vector.
Modeling actions this way introduces structure into the action space by treating different kinds of
continuous actions as distinct.
At each step an agent must choose both which action to use and what parameters to execute it with.
For example, consider a soccer playing robot which can kick, pass, or run.
We can associate a continuous parameter vector to each of these actions: we can kick the ball to a given target position with a given force,
pass to a specific position, and run with a given velocity.
Each of these actions is parameterized in its own way.
Parameterized action Markov decision processes (PAMDPs)
model situations where we have distinct actions that require parameters to adjust the action to different situations,
or where there are multiple mutually incompatible continuous actions.

We focus on how to learn an action-selection policy given pre-defined parameterized actions.
We introduce the Q-PAMDP algorithm, which alternates learning 
action-selection and parameter-selection policies and compare it to a direct policy search method.
We show that with appropriate update rules Q-PAMDP converges to a local optimum.
These methods are compared empirically in the goal and Platform domains.
We found that Q-PAMDP out-performed direct policy search and fixed parameter SARSA.

\section{Background}

A Markov decision process (MDP) is a tuple $\langle S,A,P,R, \gamma \rangle$,
where $S$ is a set of states, $A$ is a set of actions, $P(s, a, s^\prime)$
is the probability of transitioning to state $s^\prime$ from state $s$ after taking action $a$,
$R(s,a, r)$ is the probability of receiving reward $r$ for taking action $a$ in state $s$,
and $\gamma$ is a discount factor \cite{sutton1998introduction}.
We wish to find a policy, $\pi(a|s)$, which selects an action for each state so as to maximize the expected sum of discounted rewards (the return).

The value function $V^\pi(s)$ is defined as the expected discounted return achieved by policy $\pi$ starting at state $s$
\[V^\pi(s) = \mathbb{E}_\pi \left[ \sum_{t=0}^\infty \gamma^t r_t \right].\]
Similarly, the action-value function is given by
\[Q^\pi(s,a) = \mathbb{E}_\pi \left[ r_0 + \gamma V^\pi(s^\prime) \right],\]
as the expected return obtained by taking action $a$ in state $s$, and then following policy $\pi$ thereafter.
While using the value function in control requires a model, we would prefer to do so without needing such a model.
We can approach this problem by learning $Q$,
which allows us to directly select the action which maximizes $Q^\pi(s,a)$.
We can learn $Q$ for an optimal policy using a method such as Q-learning \cite{watkins1992q}.
In domains with a continuous state space, we can represent $Q(s,a)$ using parametric function approximation with a set of parameters $\omega$
and learn this with algorithms such as gradient descent SARSA($\lambda$) \cite{sutton1998introduction}.

For problems with a continuous action space ($A \subseteq \mathbb{R}^m$), selecting the
optimal action with respect to $Q(s,a)$ is
non-trivial, as it requires finding a global maximum for a function in a continuous space.
We can avoid this problem using a policy search algorithm, where a class of policies parameterized by a set of parameters $\theta$ is given,
which transforms the problem into one of direct optimization over $\theta$ for an objective function $J(\theta)$.
Several policy search approaches exist, including policy gradient methods,
entropy-based approaches, path integral approaches, and sample-based approaches
\cite{deisenroth2013survey}.

\subsection{Parameterized Tasks}

A parameterized task is a problem defined by a task parameter vector $\tau$ given at the beginning of each episode.
These parameters are fixed throughout an episode, and the goal is to learn a task dependent policy.
Kober \textit{et al.} \shortcite{kober2012reinforcement} developed algorithms to adjust motor primitives to different task parameters.
They apply this to learn table-tennis and darts with different starting positions and targets.
Da Silva \textit{et al.} \shortcite{silva2012learning} introduced the idea of a parameterized skill as a task dependent parameterized policy.
They sample a set of tasks, learn their associated parameters,
and determine a mapping from task to policy parameters.
Deisenroth \textit{et al.} \shortcite{deisenroth2014multi} applied a model-based method to learn a task dependent parameterized policy.
This is used to learn task dependent policies for ball-hitting task, and for solving a block manipulation problem.
Parameterized tasks can be used as parameterized actions.
For example, if we learn a parameterized task for kicking a ball to position $\tau$,
this could be used as a parameterized action kick-to($\tau$).

\section{Parameterized Action MDPs}

\begin{figure}[t]
\centering
    \begin{subfigure}[t]{0.225\textwidth}
        \includegraphics[width=\textwidth]{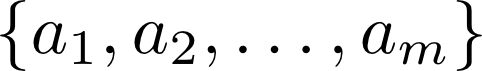}
        \caption{The discrete action space consists of a finite set of distinct actions.}
    \end{subfigure}
    ~
    ~
    ~
    \begin{subfigure}[t]{0.125\textwidth}
        \includegraphics[width=\textwidth]{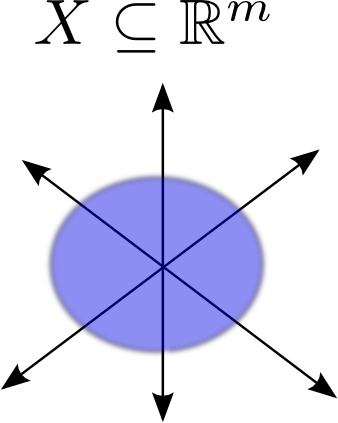}
        \caption{The continuous action space is a single continuous real-valued space.}
    \end{subfigure}
    \begin{subfigure}[t]{0.45\textwidth}
        \includegraphics[width=\textwidth]{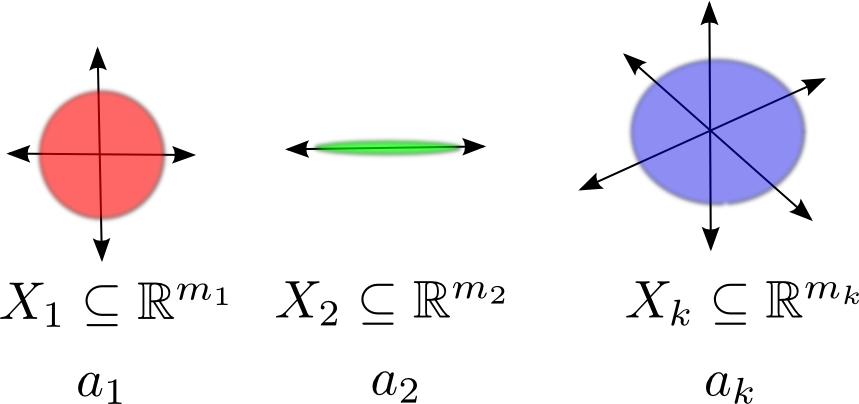}
        \caption{The parameterized action space has multiple discrete actions, each of which has a continuous parameter space.}
    \end{subfigure}
\caption{Three types of action spaces: discrete, continuous, and parameterized.}
\label{fig:action_spaces}
\end{figure}

We consider MDPs where the state space is continuous ($S \subseteq \mathbb{R}^n$) and the actions are parameterized:
there is a finite set of discrete actions $A_d = \{a_1,a_2,\ldots,a_k\}$,
and each $a \in A_d$ has a set of continuous parameters $X_a \subseteq \mathbb{R}^{m_a}$.
An action is a tuple $(a, x)$ where $a$ is a discrete action and $x$ are the parameters for that action.
The action space is then given by
\[A = \bigcup_{a \in A_d} \{(a,x)~|~x \in X_a\},\]
which is the union of each discrete action with all possible parameters for that action.
We refer to such MDPs as parameterized action MDPs (PAMDPs).
Figure \ref{fig:action_spaces} depicts the different action spaces.

We apply a two-tiered approach for action selection:
first selecting the parameterized action, then selecting the parameters for that action.
The discrete-action policy is denoted $\pi^d(a|s)$.
To select the parameters for the action, we define the action-parameter policy for each action $a$ as $\pi^a(x|s)$.
The policy is then given by
\[\pi(a,x|s) = \pi^d(a|s)\pi^a(x|s).\]
In other words, to select a complete action $(a,x)$,
we sample a discrete action $a$ from $\pi^d(a|s)$ and then sample a parameter $x$ from $\pi^a(x|s)$.
The action policy is defined by the parameters $\omega$ and is denoted by $\pi_\omega^d(a|s)$.
The action-parameter policy for action $a$ is determined by a set of parameters $\theta_a$, and is denoted $\pi^a_\theta(x|s)$.
The set of these parameters is given by $\theta = [\theta_{a_1},\ldots,\theta_{a_k}]$.

The first approach we consider is direct policy search.
We use a direct policy search method to optimize the objective function.
\[J(\theta, \omega) = \mathbb{E}_{s_0 \sim D} [ V^{\pi_\Theta}(s_0)].\]
with respect to $(\theta, \omega)$, where $s_0$ is a state sampled according to the state distribution $D$.
$J$ is the expected return for a given policy starting at an initial state.

Our second approach is to alternate updating the parameter-policy and learning an action-value function for the discrete actions.
For any PAMDP $M = \langle S, A, P, R, \gamma \rangle$ with a fixed parameter-policy $\pi^a_\theta$,
there exists a corresponding discrete action MDP, $M_\theta = \langle S, A_d, P_\theta, R_\theta, \gamma \rangle$, where
\begin{align*}
P_\theta(s^\prime|s,a) &= \int\limits_{x \in X_a} \pi_\theta^a(x|s) P(s^\prime|s,a,x)dx,\\
R_\theta(r|s,a) &= \int\limits_{x \in X_a} \pi_\theta^a(x|s) R(r|s,a,x)dx.
\end{align*}

We represent the action-value function for $M_\theta$ using function approximation with parameters $\omega$.
For $M_\theta$, there exists an optimal set of representation weights
$\omega_{\theta}^*$ which maximizes $J(\theta, \omega)$ with respect to $\omega$.
Let
\[W(\theta) = \arg\max\limits_{\omega} J(\theta, \omega) = \omega_{\theta}^*.\]
We can learn $W(\theta)$ for a fixed $\theta$ using a Q-learning algorithm.
Finally, we define for fixed $\omega$,
\begin{align*}
J_\omega(\theta) &= J(\theta, \omega), \\
H(\theta) &= J(\theta, W(\theta)).
\end{align*}
$H(\theta)$ is the performance of the best discrete policy for a fixed $\theta$.

\begin{algorithm}[tb!]
    \caption{Q-PAMDP($k$)}
    \label{alg:Q-PAMDP}
\begin{algorithmic}
    \STATE {\bfseries Input:}
    \STATE Initial parameters $\theta_0, \omega_0$
    \STATE Parameter update method P-UPDATE
    \STATE Q-learning algorithm Q-LEARN
    \STATE {\bfseries Algorithm:}
    \STATE $\omega \gets \text{Q-LEARN}^{(\infty)}(M_{\theta}, \omega_0)$
    \REPEAT
        \STATE $\theta \gets \text{P-UPDATE}^{(k)}(J_\omega, \theta)$
        \STATE $\omega \gets \text{Q-LEARN}^{(\infty)}(M_\theta, \omega)$
    \UNTIL $\theta$ converges
\end{algorithmic}
\end{algorithm}

Algorithm \ref{alg:Q-PAMDP} describes a method for alternating updating $\theta$ and $\omega$.
The algorithm uses two input methods: P-UPDATE and Q-LEARN and a positive integer parameter $k$, which determines the number of updates to $\theta$ for each iteration.
P-UPDATE($f, \theta)$ should be a policy search method that optimizes $\theta$ with respect to objective function $f$.
Q-LEARN can be any algorithm for Q-learning with function approximation.
We consider two main cases of the Q-PAMDP algorithm: Q-PAMDP($1$) and Q-PAMDP($\infty$).

Q-PAMDP(1) performs a single update of $\theta$ and then relearns $\omega$ to convergence.
If at each step we only update $\theta$ once, and then update $\omega$ until convergence, we can optimize $\theta$ with respect to $H$.
In the next section we show that if we can find a local optimum $\theta$ with respect to $H$, then we have found a local optimum
with respect to $J$.

\section{Theoretical Results}

\newtheorem{theorem}{Theorem}[section]
\newtheorem{lemma}{Lemma}[section]

We now show that Q-PAMDP(1) converges to a local or global optimum with mild assumptions.
We assume that iterating P-UPDATE converges to some $\theta^*$ with respect to a given objective function $f$.
As the P-UPDATE step is a design choice, it can be selected with the appropriate convergence property.
Q-PAMDP(1) is equivalent to the sequence 
\begin{align*}
\omega_{t+1} &= W(\theta_{t}) \\
\theta_{t+1} &= \text{P-UPDATE}(J_{\omega_{t+1}}, \theta_t), \\
\end{align*}
if Q-LEARN converges to $W(\theta)$ for each given $\theta$.

\begin{theorem}[Convergence to a Local Optimum]
For any $\theta_0$, if the sequence
\begin{align}
\label{eq:cond}
\theta_{t+1} = \text{P-UPDATE}(H, \theta_t),
\end{align}
converges to a local optimum with respect to $H$, then Q-PAMDP(1) converges to a local optimum with respect to $J$.
\end{theorem}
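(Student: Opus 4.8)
The plan is to show that the sequence of $\theta$-iterates produced by Q-PAMDP(1) is \emph{identical} to the sequence $\theta_{t+1} = \text{P-UPDATE}(H,\theta_t)$ of \eqref{eq:cond}, and then to transfer the resulting local optimum of $H$ into a local optimum of $J$. The elementary fact driving everything is that $H$ dominates every $J_\omega$: for all $\theta$ and all $\omega$,
$J_\omega(\theta) = J(\theta,\omega) \le \max_{\omega'} J(\theta,\omega') = H(\theta)$,
with equality exactly when $\omega = W(\theta)$.

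First I would argue the two iterations agree. In Q-PAMDP(1) we have $\omega_{t+1} = W(\theta_t)$, so the function $g_t := H - J_{\omega_{t+1}}$ is nonnegative everywhere and vanishes at $\theta_t$; hence $\theta_t$ is a global minimizer of $g_t$, and therefore $\nabla H(\theta_t) = \nabla J_{\omega_{t+1}}(\theta_t)$. (This is the envelope theorem for the $\omega$-maximization defining $H$.) If P-UPDATE is a first-order method — its output depends on the objective only through its value and gradient at the current iterate, a property we are entitled to demand since the choice of P-UPDATE is ours — then $\text{P-UPDATE}(J_{\omega_{t+1}},\theta_t) = \text{P-UPDATE}(H,\theta_t)$, so the $\theta$-iterates of Q-PAMDP(1) coincide with those of \eqref{eq:cond}. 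By hypothesis they converge to some $\theta^*$ that is a local optimum of $H$, and since $\omega_{t+1} = W(\theta_t)$ with $W$ continuous, $\omega_t \to \omega^* := W(\theta^*)$.

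It then remains to check that $(\theta^*,\omega^*)$ is a local optimum of $J$. Let $U$ be a neighborhood of $\theta^*$ on which $H(\theta) \le H(\theta^*)$. For every $\theta \in U$ and \emph{every} $\omega$,
$J(\theta,\omega) \le H(\theta) \le H(\theta^*) = J(\theta^*,W(\theta^*)) = J(\theta^*,\omega^*)$,
using the domination inequality and the definitions of $H$ and $W$. Hence $J$ attains a local maximum at $(\theta^*,\omega^*)$ — indeed a maximum that is local in $\theta$ and global in $\omega$ — which is the claimed conclusion. If $\theta^*$ happens to be a global optimum of $H$, the same chain of inequalities shows $(\theta^*,\omega^*)$ is a global optimum of $J$.

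I expect the main obstacle to be the second step: rigorously identifying the run of P-UPDATE on $J_{\omega_{t+1}}$ with the run on $H$. The gradient-equality argument is clean, but it needs $J(\cdot,\omega)$ and, more delicately, $H$ to be differentiable at the iterates, and $H$, being a pointwise maximum over $\omega$, can have kinks. I would handle this either by assuming the requisite smoothness (consistent with the paper's standing assumptions that Q-LEARN reaches $W(\theta)$ and that P-UPDATE converges), or by working with one-sided directional derivatives: since $g_t \ge 0$ with $g_t(\theta_t) = 0$ and $\omega_{t+1}$ is the unique maximizer defining $H(\theta_t)$, the directional derivatives of $J_{\omega_{t+1}}$ and of $H$ agree at $\theta_t$ in every direction, which is all a first-order update rule sees. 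Everything else — the domination inequality and the transfer to $J$ — is routine.
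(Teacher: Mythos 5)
Your proof is correct, and its second half --- transferring local optimality of $H$ at $\theta^*$ to local optimality of $J$ at $(\theta^*,\omega^*)$ with $\omega^* = W(\theta^*)$ via the chain $J(\theta,\omega) \leq H(\theta) \leq H(\theta^*) = J(\theta^*,\omega^*)$ --- is exactly the paper's argument. Where you genuinely diverge is the first half: identifying the Q-PAMDP(1) iterates with those of equation \ref{eq:cond}. The paper disposes of this in one line by writing $J_{\omega_t} = J(\theta, W(\theta)) = H(\theta)$, i.e.\ it treats $J_{\omega_t}$ and $H$ as equal \emph{as functions}; but $J_{\omega_t}(\theta) = J(\theta, W(\theta_t))$, which coincides with $H(\theta) = J(\theta,W(\theta))$ only at $\theta = \theta_t$ (or wherever $W$ is locally constant), so the substitution of $H$ for $J_{\omega_{t+1}}$ inside P-UPDATE is really an unstated assumption that P-UPDATE cannot distinguish the two objectives. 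You make that assumption explicit and then discharge it: since $H - J_{\omega_{t+1}} \geq 0$ with equality at $\theta_t$, the two objectives share their value and (by your envelope/Danskin argument with a unique maximizer --- which is precisely the content of the paper's later ``Gradient of $H(\theta)$'' theorem) their gradient at $\theta_t$, so any first-order P-UPDATE produces the same next iterate. This buys a real gain in rigor: it isolates exactly what must be demanded of P-UPDATE (that its output depend only on $f(\theta_t)$ and $\nabla f(\theta_t)$) for the theorem's hypothesis about the $H$-sequence to say anything about the algorithm actually run, and it correctly flags the residual smoothness issue for $H$ that neither proof fully resolves. Your closing remarks (the optimum is global in $\omega$, and a global optimum of $H$ yields a global optimum of $J$) match observations the paper makes outside the proof.
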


\begin{proof}
By definition of the sequence above $\omega_t = W(\theta_t)$, so it follows that
\begin{align*}
J_{\omega_t} = J(\theta, W(\theta)) = H(\theta).
\end{align*}
In other words, the objective function $J$ equals $H$ if $\omega = W(\theta)$.
Therefore, we can replace $J$ with $H$ in our update for $\theta$, to obtain the update rule
\begin{align*}
\theta_{t+1} &= \text{P-UPDATE}(H, \theta_t).
\end{align*}
Therefore by equation \ref{eq:cond} the sequence $\theta_t$
converges to a local optimum $\theta^*$ with respect to $H$.
Let $\omega^* = W(\theta^*)$.
As $\theta^*$ is a local optimum with respect to $H$, by definition there exists $\epsilon > 0$, $s.t.$
\begin{align*}
\left|\left|\theta^* - \theta\right|\right|_2 < \epsilon \implies H(\theta) \leq H(\theta^*).
\end{align*}
Therefore for any $\omega$,
\begin{align*}
\left|\left| \left( \begin{matrix} \theta^* \\ \omega^* \end{matrix} \right)
- \left( \begin{matrix} \theta \\ \omega \end{matrix} \right) \right|\right|_2 < \epsilon &\implies
\left|\left| \theta^* - \theta \right|\right|_2 < \epsilon \\
& \implies H(\theta) \leq H(\theta^*) \\
& \implies J(\theta, \omega) \leq J(\theta^*, \omega^*).
\end{align*}
Therefore $(\theta^*, \omega^*)$ is a local optimum with respect to $J$.
\end{proof}

In summary, if we can locally optimize $\theta$, and $\omega = W(\theta)$ at each step,
then we will find a local optimum for $J(\theta, \omega)$.
The conditions for the previous theorem can be met by 
assuming that P-UPDATE is a local optimization method such as a gradient based policy search.
A similar argument shows that if the sequence $\theta_t$ converges to a global optimum with respect to $H$,
then Q-PAMDP(1) converges to a global optimum $(\theta^*, \omega^*)$.

One problem is that at each step we must re-learn $W(\theta)$ for the updated value of $\theta$.
We now show that if updates to $\theta$ are bounded and $W(\theta)$ is a continuous function,
then the required updates to $\omega$ will also be bounded.
Intuitively, we are supposing that a small update to $\theta$ results in a small change
in the weights specifying which discrete action to choose.
The assumption that $W(\theta)$ is continuous is strong, and may not be satisfied by all PAMDPs.
It is not necessary for the operation of Q-PAMDP(1), but when it is satisfied we do not need to completely
re-learn $\omega$ after each update to $\theta$.
We show that by selecting an appropriate $\alpha$ we can shrink the differences in $\omega$ as desired.

\begin{theorem}[Bounded Updates to $\omega$]
If $W$ is continuous with respect to $\theta$, and updates to $\theta$ are of the form
\begin{align*}
\theta_{t+1} = \theta_t + \alpha_t \text{P-UPDATE}(\theta_t, \omega_t),
\end{align*}
with the norm of each P-UPDATE bounded by
\[0 < \left|\left|\text{P-UPDATE}(\theta_t, \omega_t)\right|\right|_2 < \delta,\]
for some $\delta > 0$,
then for any difference in $\omega$ $\epsilon >0$, there is an initial update rate $\alpha_0 > 0$ such that 
\[\alpha_t < \alpha_0 \implies \left|\left|\omega_{t+1} - \omega_t\right|\right|_2 < \epsilon.\]

\end{theorem}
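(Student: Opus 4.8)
The plan is to exploit the continuity of $W$ directly. Since $\omega_t = W(\theta_t)$ (this is the invariant maintained by Q-PAMDP(1), as Q-LEARN is run to convergence after each $\theta$ update), we have $\omega_{t+1} - \omega_t = W(\theta_{t+1}) - W(\theta_t)$. So bounding $\|\omega_{t+1} - \omega_t\|_2$ reduces to bounding $\|W(\theta_{t+1}) - W(\theta_t)\|_2$, and the update rule gives us control over $\|\theta_{t+1} - \theta_t\|_2 = \alpha_t \|\text{P-UPDATE}(\theta_t,\omega_t)\|_2 < \alpha_t \delta$.

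First I would fix the target difference $\epsilon > 0$ and invoke continuity of $W$ at the point $\theta_t$: there exists $\eta > 0$ such that $\|\theta' - \theta_t\|_2 < \eta$ implies $\|W(\theta') - W(\theta_t)\|_2 < \epsilon$. Then I would set $\alpha_0 = \eta / \delta$. For any $\alpha_t < \alpha_0$, the bound on P-UPDATE gives $\|\theta_{t+1} - \theta_t\|_2 < \alpha_t \delta < \alpha_0 \delta = \eta$, so by the choice of $\eta$ we get $\|W(\theta_{t+1}) - W(\theta_t)\|_2 < \epsilon$, i.e. $\|\omega_{t+1} - \omega_t\|_2 < \epsilon$, as required.

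The only subtlety worth flagging — and the closest thing to an obstacle — is that the $\eta$ produced by continuity depends on the base point $\theta_t$, so strictly speaking $\alpha_0$ may depend on $t$; the statement as phrased ("there is an initial update rate $\alpha_0$") is naturally read per-step, which is all that is needed to conclude that the $\omega$-updates can be made as small as desired by shrinking the step size. If a uniform $\alpha_0$ were wanted one would need uniform continuity of $W$ (e.g. $W$ Lipschitz, or $\theta$ ranging over a compact set), but that strengthening is not required here. I would also note in passing that the lower bound $0 < \|\text{P-UPDATE}(\theta_t,\omega_t)\|_2$ plays no role in the argument — it is there only to ensure the update is nontrivial — so the proof uses only the upper bound $\delta$.
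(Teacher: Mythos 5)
Your proof is correct and follows the same overall strategy as the paper's---reduce $\left\|\omega_{t+1}-\omega_t\right\|_2$ to $\left\|W(\theta_{t+1})-W(\theta_t)\right\|_2$ and control $\left\|\theta_{t+1}-\theta_t\right\|_2$ through the step size---but your version is more careful and in fact repairs a gap in the paper's own argument. The paper sets $\alpha_0 = \delta/\left\|\text{P-UPDATE}(\theta_t,\omega_t)\right\|_2$, which yields only $\left\|\theta_{t+1}-\theta_t\right\|_2 < \delta$, and then asserts that continuity of $W$ gives $\left\|W(\theta_{t+1})-W(\theta_t)\right\|_2 < \epsilon$. As written that step is a non sequitur: $\delta$ is the exogenous bound on the P-UPDATE norm, not the modulus of continuity of $W$ at tolerance $\epsilon$, and the paper's $\alpha_0$ does not depend on $\epsilon$ at all, so it cannot deliver an $\epsilon$-dependent conclusion. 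Your insertion of the intermediate $\eta$ from the definition of continuity at $\theta_t$, together with the choice $\alpha_0 = \eta/\delta$, is exactly the missing link. Your side remarks are also apt: $\alpha_0$ is genuinely per-step (a uniform rate would require uniform continuity of $W$ or compactness of the parameter set), and the lower bound $0 < \left\|\text{P-UPDATE}(\theta_t,\omega_t)\right\|_2$ is not needed in your argument, whereas the paper implicitly relies on it only to avoid division by zero in its definition of $\alpha_0$.
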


\begin{proof}
Let $\epsilon > 0$ and
\[\alpha_0 = \frac{\delta}{\left|\left|\text{P-UPDATE}(\theta_t, \omega_t)\right|\right|_2}.\]
As $\alpha_t < \alpha_0,$ it follows that
\begin{align*}
\delta &> \alpha_t\left|\left|\text{P-UPDATE}(\theta_t, \omega_t)\right|\right|_2 \\
 &= \left|\left|\alpha_t\text{P-UPDATE}(\theta_t, \omega_t)\right|\right|_2 \\
 &= \left|\left|\theta_{t+1} - \theta_t\right|\right|_2.
\end{align*}
So we have
\[\left|\left|\theta_{t+1} - \theta_t\right|\right|_2 < \delta.\]
As $W$ is continuous, this means that
\[\left|\left|W(\theta_{t+1}) - W(\theta_t)\right|\right|_2 = \left|\left|\omega_{t+1} - \omega_t\right|\right|_2 < \epsilon.\]
\end{proof}

In other words, if our update to $\theta$ is bounded and $W$ is continuous, we can always adjust the learning rate $\alpha$
so that the difference between $\omega_{t}$ and $\omega_{t+1}$ is bounded.

With Q-PAMDP(1) we want P-UPDATE to optimize $H(\theta)$.
One logical choice would be to use a gradient update.
The next theorem shows that gradient of $H$ is equal to the gradient of $J$ if $\omega = W(\theta)$.
This is useful as we can apply existing gradient-based policy search methods to compute the gradient of $J$ with respect to $\theta$.
The proof follows from the fact that we are at a global optimum of $J$ with respect to $\omega$, and so the gradient
$\nabla_\omega J$ is zero.
This theorem requires that $W$ is differentiable (and therefore also continuous).

\begin{theorem}[Gradient of $H(\theta)$]
If $J(\theta, \omega)$ is differentiable with respect to $\theta$ and $\omega$ and $W(\theta)$ is differentiable with respect to $\theta$,
then the gradient of $H$ is given by $\nabla_\theta H(\theta) = \nabla_\theta J(\theta, \omega^*)$,
where $\omega^* = W(\theta)$.
\end{theorem}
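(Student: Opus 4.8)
The plan is to apply the multivariable chain rule to the composition $H(\theta) = J(\theta, W(\theta))$ and then exploit the first-order optimality condition that follows from the definition $W(\theta) = \arg\max_\omega J(\theta,\omega)$. The whole argument is short once these two ingredients are in place.

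First I would write out the total derivative. Since $J$ is differentiable in both of its arguments and $W$ is differentiable in $\theta$, the chain rule gives
\[
\nabla_\theta H(\theta) = \nabla_\theta J(\theta,\omega)\big|_{\omega = W(\theta)} + \big(\nabla_\theta W(\theta)\big)^\top \nabla_\omega J(\theta,\omega)\big|_{\omega = W(\theta)},
\]
where $\nabla_\theta W(\theta)$ is the Jacobian of $W$ at $\theta$. The first term is exactly $\nabla_\theta J(\theta,\omega^*)$ with $\omega^* = W(\theta)$, so the entire claim reduces to showing that the second term vanishes.

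Next I would argue that $\nabla_\omega J(\theta, W(\theta)) = 0$. For fixed $\theta$, the point $\omega^* = W(\theta)$ is by construction a global maximizer of the differentiable map $\omega \mapsto J(\theta,\omega)$ over the (unconstrained) weight space; hence it is a stationary point of that map and its $\omega$-gradient is zero. Substituting $\nabla_\omega J(\theta,\omega^*) = 0$ into the chain-rule expression annihilates the second term irrespective of the value of the Jacobian $\nabla_\theta W(\theta)$, leaving $\nabla_\theta H(\theta) = \nabla_\theta J(\theta,\omega^*)$, as desired.

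The main obstacle is justifying $\nabla_\omega J(\theta,\omega^*) = 0$: this implicitly needs the maximum defining $W(\theta)$ to be attained at an interior point of the $\omega$-space, so that the first-order condition applies, and it needs $W$ to return a genuine maximizer rather than merely some critical point. Under the standing setup here — $\omega$ ranges over all of $\mathbb{R}^d$ as function-approximation weights, and $J$ is differentiable in $\omega$ — this is immediate, and I would state these as the operative hypotheses. A secondary, purely bookkeeping point is making the chain rule rigorous, i.e.\ invoking differentiability of $W$ to control the composed map; this is routine given the theorem's assumptions.
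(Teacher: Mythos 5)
Your proof is correct and follows essentially the same route as the paper's: apply the chain rule to $H(\theta)=J(\theta,W(\theta))$ and kill the second term using the first-order optimality condition $\nabla_\omega J(\theta,\omega^*)=0$ at the maximizer $\omega^*=W(\theta)$. Your added remark that the stationarity argument needs the maximum to be attained at an interior point of an unconstrained weight space is a useful clarification the paper leaves implicit, but the argument is the same.
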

\begin{proof}
If $\theta \in \mathbb{R}^n$ and $\omega \in \mathbb{R}^m$, then we can compute the gradient of $H$ by the chain rule:
\begin{align*}
\frac{\partial H(\theta)}{\partial \theta_i} &= \frac{\partial J(\theta, W(\theta))}{\partial \theta_i} \\
&= \sum\limits_{j=1}^n \frac{\partial J(\theta, \omega^*)}{\partial \theta_j} \frac{\partial \theta_j}{\partial \theta_i}
  +\sum\limits_{k=1}^m \frac{\partial J(\theta, \omega^*)}{\partial \omega^*_k} \frac{\partial \omega^*_k}{\partial \theta_i} \\
&=\frac{\partial J(\theta, \omega^*)}{\partial \theta_i}
  +\sum\limits_{k=1}^m \frac{\partial J(\theta, \omega^*)}{\partial \omega^*_k} \frac{\partial \omega_k^*}{\partial \theta_i},
\end{align*}
where $\omega^* = W(\theta)$.
Note that as by definitions of $W$,
\[\omega^* = W(\theta) = \arg\max\limits_\omega J(\theta, \omega),\]
we have that the gradient of $J$ with respect to $\omega$ is zero
$\partial J(\theta, \omega^*)/ \partial \omega^*_k = 0,$
as $\omega$ is a global maximum with respect to $J$ for fixed $\theta$.
Therefore, we have that
\[\nabla_\theta H(\theta) = \nabla_\theta J(\theta, \omega^*).\]
\end{proof}

To summarize, if $W(\theta)$ is continuous and P-UPDATE converges to a global or local optimum,
then Q-PAMDP(1) will converge to a global or local optimum, respectively,
and the Q-LEARN step will be bounded if the update rate of the P-UPDATE step is bounded.
As such, if P-UPDATE is a policy gradient update step then Q-PAMDP by Theorem 4.1 will converge to a local optimum and
by Theorem 4.4 the Q-LEARN step will require a fixed number of updates.
This policy gradient step can use the gradient of $J$ with respect to $\theta$.

With Q-PAMDP($\infty$) each step performs a full optimization on $\theta$ and then a full optimization of $\omega$.
The $\theta$ step would optimize $J(\theta, \omega)$, not $H(\theta)$, as we do update $\omega$ while we update $\theta$.
Q-PAMDP($\infty$) has the disadvantage of requiring global convergence properties for the P-UPDATE method.

\begin{theorem}[Local Convergence of Q-PAMDP($\infty$)]
If at each step of Q-PAMDP($\infty$) for some bounded set $\Theta$:
\begin{align*}
\theta_{t+1} &= \arg\max_{\theta \in \Theta} J(\theta, \omega_t), \\
\omega_{t+1} &= W(\theta_{t+1}),
\end{align*}
then Q-PAMDP($\infty$) converges to a local optimum.
\end{theorem}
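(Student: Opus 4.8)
The plan is to recognize Q-PAMDP($\infty$) as block-coordinate ascent on $J$: each iteration first maximizes $J(\cdot,\omega_t)$ over $\theta$, then maximizes $J(\theta_{t+1},\cdot)$ over $\omega$ (this is exactly what $\omega_{t+1}=W(\theta_{t+1})$ does). So the natural strategy is (i) show the objective values increase monotonically and converge, and (ii) show any limit point of the iterates is optimal in each coordinate block, which is the sense of ``local optimum'' that such alternating schemes deliver.

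For step (i), I would chain the two defining maximizations: $J(\theta_{t+1},\omega_t)\ge J(\theta_t,\omega_t)$ because $\theta_{t+1}$ maximizes $J(\cdot,\omega_t)$ over $\Theta$, and $J(\theta_{t+1},\omega_{t+1})\ge J(\theta_{t+1},\omega_t)$ because $\omega_{t+1}=W(\theta_{t+1})=\arg\max_\omega J(\theta_{t+1},\omega)$. Hence $J(\theta_{t+1},\omega_{t+1})\ge J(\theta_t,\omega_t)$ for every $t$. Since $J$ is an expected discounted return it is bounded above (bounded rewards and $\gamma<1$), so the monotone sequence $J(\theta_t,\omega_t)$ converges to some value $J^*$.

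For step (ii), I would invoke compactness of $\Theta$ (replacing it by its closure if necessary) to extract a subsequence with $\theta_{t_k}\to\theta^*\in\Theta$; continuity of $W$ then gives $\omega_{t_k}=W(\theta_{t_k})\to\omega^*:=W(\theta^*)$, and continuity of $J$ gives $J(\theta^*,\omega^*)=J^*$. To characterize $\theta^*$, fix any $\theta\in\Theta$ and note $J(\theta,\omega_{t_k})\le J(\theta_{t_k+1},\omega_{t_k})\le J(\theta_{t_k+1},\omega_{t_k+1})$, whose right-hand side tends to $J^*$; letting $k\to\infty$ gives $J(\theta,\omega^*)\le J(\theta^*,\omega^*)$, so $\theta^*$ globally maximizes $J(\cdot,\omega^*)$ over $\Theta$. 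Combined with $\omega^*=W(\theta^*)$, which makes $\omega^*$ a global maximizer of $J(\theta^*,\cdot)$, this exhibits $(\theta^*,\omega^*)$ as a partial (block-coordinate) optimum of $J$, and in particular a stationary point whenever $\theta^*$ lies in the interior of $\Theta$.

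The main obstacle is the passage to the limit in step (ii): monotone convergence of the values $J(\theta_t,\omega_t)$ does not by itself force the iterates to converge, so the conclusion has to be phrased for limit points, and carrying the inequalities through relies on continuity of both $J$ and $W$ together with compactness of $\Theta$; if the $\arg\max$ over $\theta$ is not single-valued one additionally needs a subsequence or selection argument. A secondary subtlety I would flag is that what emerges directly is block-coordinate optimality rather than a joint local optimum in the full $(\theta,\omega)$ space; upgrading to the stronger statement would reuse the observation behind Theorem 4.1 that $J(\theta,\omega)\le J(\theta,W(\theta))$ holds everywhere, so this is where the remaining work would go.
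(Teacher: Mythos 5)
Your proposal is correct and takes essentially the same route as the paper: the paper likewise recognizes the two updates as alternating (block-coordinate) optimization of $J$ and concludes local convergence, though it simply cites Bezdek and Hathaway's convergence result for alternating optimization where you supply the standard monotonicity-plus-limit-point argument (with the attendant compactness and continuity hypotheses) yourself. Your closing caveat---that what falls out directly is block-coordinate optimality, upgraded to a joint local optimum only via the observation that $J(\theta,\omega) \leq J(\theta, W(\theta))$ for all $\omega$---is exactly the point the paper leaves buried in the citation, and is worth making explicit.
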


\begin{proof}
By definition of $W$,
$\omega_{t+1} = \arg\max_{\omega} J(\theta_{t+1}, \omega).$
Therefore this algorithm takes the form of direct alternating optimization.
As such, it converges to a local optimum \cite{bezdek2002some}.
\end{proof}

Q-PAMDP($\infty$) has weaker convergence properties than Q-PAMDP(1), as it requires a globally convergent P-UPDATE.
However, it has the potential to bypass nearby local optima \cite{bezdek2002some}.

\section{Experiments}

We first consider a simplified robot soccer problem \cite{kitano1997robocup}
where a single striker attempts to score a goal against a keeper.
Each episode starts with the player at a random position along the bottom bound of the field.
The player starts with the ball in possession, and the keeper is positioned between the ball and the goal.
The game takes place in a 2D environment where the player and the keeper have a position, velocity and orientation and the ball has
a position and velocity, resulting in 14 continuous state variables.


An episode ends when the keeper possesses the ball, the player scores a goal, or the ball leaves the field.
The reward for an action is 0 for non-terminal state, 50 for a terminal goal state,
and $-d$ for a terminal non-goal state, where $d$ is the distance of the ball to the goal.
The player has two parameterized actions: kick-to$(x,y)$, which kicks to ball towards position $(x,y)$;
and shoot-goal($h$), which shoots the ball towards a position $h$ along the goal line.
Noise is added to each action.
If the player is not in possession of the ball, it moves towards it.
The keeper has a fixed policy: it moves towards the ball, and if the player shoots at the goal, the keeper moves to intercept the ball.

To score a goal, the player must shoot around the keeper.
This means that at some positions it must shoot left past the keeper, and at others to the right past the keeper.
However at no point should it shoot at the keeper, so an optimal policy is discontinuous.
We split the action into two parameterized actions: shoot-goal-left, shoot-goal-right.
This allows us to use a simple action selection policy instead of complex continuous action policy.
This policy would be difficulty to represent in a purely continuous action space, but is simpler in a parameterized action setting.

We represent the action-value function for the discrete action $a$ using linear function approximation
with Fourier basis features \cite{konidaris2011value}.
As we have 14 state variables, we must be selective in which basis functions to use.
We only use basis functions with two non-zero elements and exclude all velocity state variables.
We use the soft-max discrete action policy \cite{sutton1998introduction}.
We represent the action-parameter policy $\pi_\theta^{a}$ as a normal distribution around a weighted sum of features
$\pi_\theta^a(x | s) = \mathcal{N} (\theta_a^T \psi_a(s), \Sigma)$,
where $\theta_a$ is a matrix of weights, and $\psi_a(s)$ gives the features for state $s$, and $\Sigma$ is a fixed covariance matrix.
We use specialized features for each action.
For the shoot-goal actions we are using a simple linear basis $(1, g)$, where $g$ is the projection of the keeper onto the goal line.
For kick-to we use linear features $(1, bx, by, bx^2, by^2, (bx - kx)/\left|\left|b - k\right|\right|_2
,(by - ky)/\left|\left|b - k\right|\right|_2)$, where $(bx, by)$ is the position of the ball and $(kx, ky)$ is the position of the keeper.

For the direct policy search approach, we use the episodic natural actor critic (eNAC) algorithm \cite{peters2008natural},
computing the gradient of $J(\omega, \theta)$ with respect to $(\omega, \theta)$.
For the Q-PAMDP approach we use the gradient-descent Sarsa($\lambda$) algorithm for Q-learning,
and the eNAC algorithm for policy search.
At each step we perform one eNAC update based on 50 episodes and then refit $Q_\omega$ using 50 gradient descent Sarsa($\lambda$) episodes.

\begin{figure}[h]
    \includegraphics[scale=0.45]{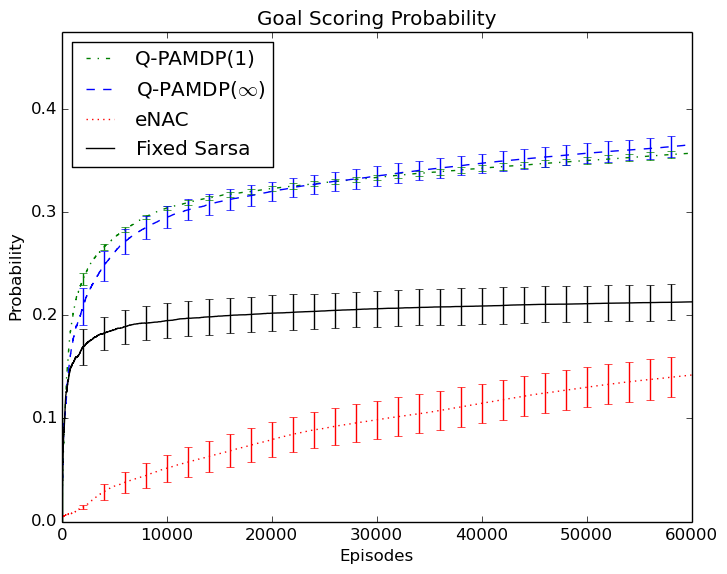}
    \centering
    \caption{
    Average goal scoring probability, averaged over 20 runs for Q-PAMDP(1), Q-PAMDP($\infty$), fixed parameter Sarsa, and eNAC in the goal domain.
    Intervals show standard error.}
    \label{fig:goals}
\end{figure}

Return is directly correlated with goal scoring probability, so their graphs are close to indentical.
As it is easier to interpret, we plot goal scoring probability in figure \ref{fig:goals}.
We can see that direct eNAC is outperformed by Q-PAMDP(1) and Q-PAMDP($\infty$).
This is likely due to the difficulty of optimizing the action selection parameters directly, rather than with Q-learning.

\begin{figure}[h]
    \includegraphics[scale=0.35]{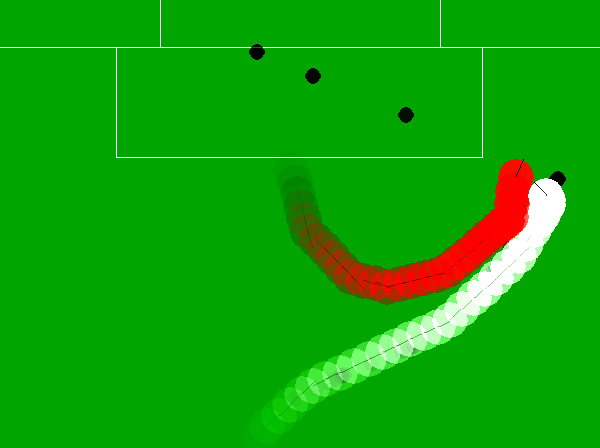}
    \centering
    \caption{A robot soccer goal episode using a converged Q-PAMDP(1) policy.
The player runs to one side, then shoots immediately upon overtaking the keeper.}
    \label{fig:episode}
\end{figure}

For both methods, the goal probability is greatly increased:
while the initial policy rarely scores a goal,
both Q-PAMDP(1) and Q-PAMDP($\infty$) increase the probability of a goal to roughly 35\%.
Direct eNAC converged to a local maxima of 15\%.
Finally, we include the performance of SARSA($\lambda$) where the action parameters are fixed at the initial $\theta_0$.
This achieves roughly 20\% scoring probability.
Both Q-PAMDP(1) and Q-PAMDP($\infty$) strongly out-perform fixed parameter SARSA, but eNAC does not.
Figure \ref{fig:episode} depicts a single episode using a converged Q-PAMDP(1) policy---
the player draws the keeper out and strikes when the goal is open.

\begin{figure}[t]
    \begin{subfigure}[t]{0.129\textwidth}
        \includegraphics[width=\textwidth]{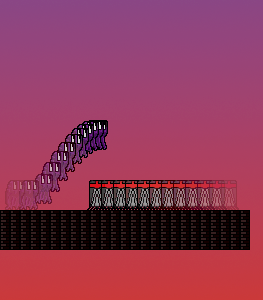}
    \end{subfigure}
    ~
    ~
    \begin{subfigure}[t]{0.135\textwidth}
        \includegraphics[width=\textwidth]{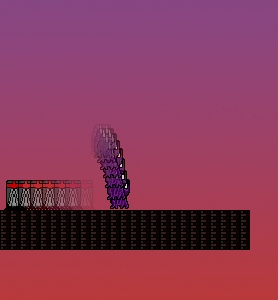}
    \end{subfigure}
    ~
    ~
    \begin{subfigure}[t]{0.132\textwidth}
        \includegraphics[width=\textwidth]{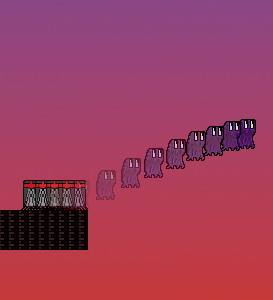}
    \end{subfigure}
    \centering
\caption{A screenshot from the Platform domain. The player hops over an enemy, and then leaps over a gap.}
\label{fig:plats}
\end{figure}

Next we consider the Platform domain, where the agent starts on a platform and must reach a goal while avoiding enemies. 
If the agent reaches the goal platform, touches an enemy, or falls into a gap between platforms, the episode ends.
This domain is depicted in figure \ref{fig:plats}.
The reward for a step is the change in $x$ value for that step, divided by the total length of all the platforms and gaps.
The agent has two primitive actions: run or jump, which continue for a fixed period or until the agent lands again respectively.
There are two different kinds of jumps: a high jump to get over enemies, and a long jump to get over gaps between platforms.
The domain therefore has three parameterized actions: run($dx$), hop($dx$), and leap($dx$).
The agent only takes actions while on the ground, and enemies only move when the agent is on their platform.
The state space consists of four variables $(x, \dot{x}, ex, \dot{ex})$,
representing the agent position, agent speed, enemy position, and enemy speed respectively.
For learning $Q_\omega$, as in the previous domain, we use linear function approximation with the Fourier basis.
We apply a softmax discrete action policy based on $Q_\omega$, and a Gaussian parameter policy based on scaled parameter features $\psi_a(s)$.

\begin{figure}[h]
    \includegraphics[scale=0.45]{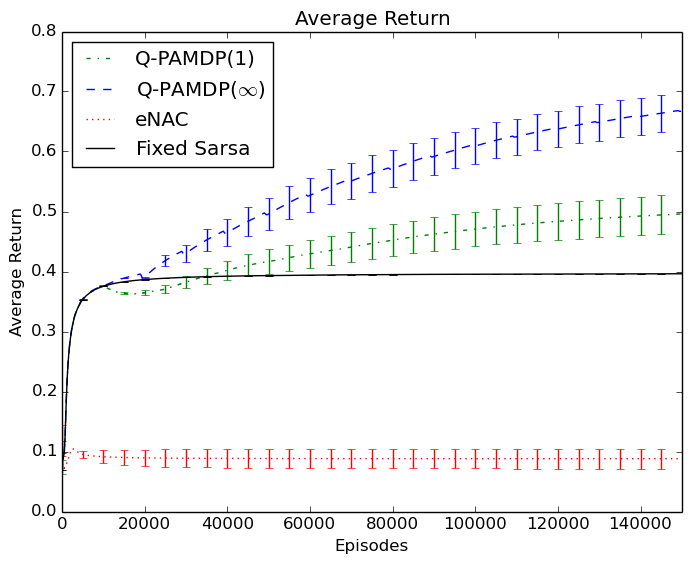}
    \centering
    \caption{
    Average percentage distance covered, averaged over 20 runs for Q-PAMDP(1), Q-PAMDP($\infty$), and eNAC in the Platform domain.
    Intervals show standard error.}
    \label{fig:return}
\end{figure}

\begin{figure}[h]
    \includegraphics[scale=0.225]{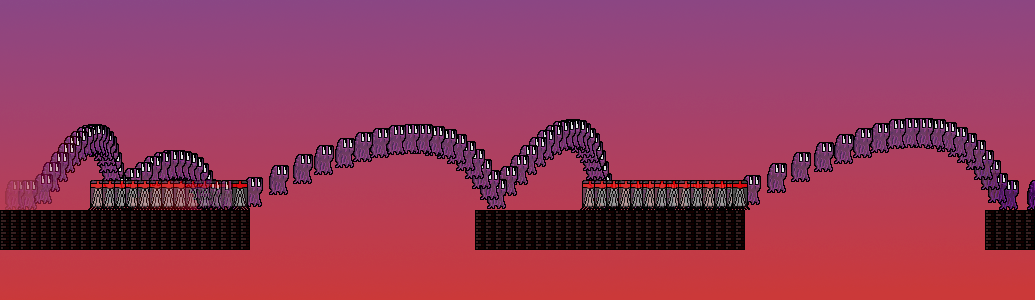}
    \centering
    \caption{A successful episode of the Platform domain.
The agent hops over the enemies, leaps over the gaps, and reaches the last platform.}
    \label{fig:plat_episode}
\end{figure}

Figure \ref{fig:return} shows the performance of eNAC, Q-PAMDP(1), Q-PAMDP($\infty$), and SARSA with fixed parameters.
Both Q-PAMDP(1) and Q-PAMDP($\infty$) outperformed the fixed parameter SARSA method,
reaching on average 50\% and 65\% of the total distance respectively.
We suggest that Q-PAMDP($\infty$) outperforms Q-PAMDP(1) due to the nature of the Platform domain.
Q-PAMDP(1) is best suited to domains with smooth changes in the action-value function with respect to changes in the parameter-policy.
With the Platform domain, our initial policy is unable to make the first jump without modification.
When the policy can reach the second platform, we need to drastically change the action-value function to account for this platform.
Therefore, Q-PAMDP(1) may be poorly suited to this domain as the small change in parameters
that occurs between failing to making the jump and actually making it results in a large change in the action-value function.
This is better than the fixed SARSA baseline of 40\%, and much better than direct optimization using eNAC which reached 10\%.
Figure \ref{fig:plat_episode} shows a successfully completed episode of the Platform domain.

\section{Related Work}

Hauskrecht \textit{et al.} \shortcite{hauskrecht2004factored} introduced an algorithm for solving factored MDPs with a hybrid discrete-continuous action space.
However, their formalism has an action space with a mixed set of discrete and continuous components, whereas our domain has distinct actions
with a different number of continuous components for each action.
Furthermore, they assume the domain has a compact factored representation, and only consider planning.

Rachelson \shortcite{rachelsontemporal} encountered parameterized actions in the form of an action to wait for
a given period of time in his research on time dependent, continuous time MDPs (TMDPs).
He developed XMDPs, which are TMDPs with a parameterized action space \cite{rachelsontemporal}.
He developed a Bellman operator for this domain, and in a later paper mentions that the TiMDP$_{poly}$ algorithm can work with parameterized actions,
although this specifically refers to the parameterized wait action \cite{rachelson2009timdppoly}.
This research also takes a planning perspective, and only considers a time dependent domain.
Additionally, the size of the parameter space for the parameterized actions is the same for all actions.

Hoey \textit{et al.} \shortcite{hoey2013bayesian} considered mixed discrete-continuous actions in their work on Bayesian affect control theory.
To approach this problem they use a form of POMCP, a Monte Carlo sampling algorithm, using domain specific adjustments to
compute the continuous action components \cite{silver2010monte}.
They note that the discrete and continuous components of the action space reflect different control aspects:
the discrete control provides the ``what'', while the continuous control describes the ``how'' \cite{hoey2013bayesian}.

In their research on symbolic dynamic programming (SDP) algorithms, Zamani \textit{et al.} \shortcite{zamani2012symbolic}
considered domains with a set of discrete parameterized actions.
Each of these actions has a different parameter space.
Symbolic dynamic programming is a form of planning for relational or first-order MDPs,
where the MDP has a set of logical relationships defining its dynamics and reward function.
Their algorithms represent the value function as an extended algebraic decision diagram (XADD),
and is limited to MDPs with predefined logical relations.

A hierarchical MDP is an MDP where each action has subtasks.
A subtask is itself an MDP with its own states and actions which may have their own subtasks.
Hierarchical MDPs are well-suited for representing parameterized actions as we could consider
selecting the parameters for a discrete action as a subtask.
MAXQ is a method for value function decomposition of hierarchical MDPs \cite{dietterich2000hierarchical}.
One possiblity is to use MAXQ for learning the action-values in a parameterized action problem.

\section{Conclusion}

The PAMDP formalism models reinforcement learning domains with parameterized actions.
Parameterized actions give us the adaptibility of continuous domains and to use distinct kinds of actions.
They also allow for simple representation of discontinuous policies without complex parameterizations.
We have presented three approaches for model-free learning in PAMDPs: direct optimization and two variants of the Q-PAMDP algorithm.
We have shown that Q-PAMDP(1), with an appropriate P-UPDATE method, converges to a local or global optimum.
Q-PAMDP($\infty$) with a global optimization step converges to a local optimum.

We have examined the performance of these approaches in the goal scoring domain and the Platformer domain.
The robot soccer goal domain models the situation where a striker must out-maneuver a keeper to score a goal.
Of these, Q-PAMDP(1) and Q-PAMDP($\infty$) outperformed eNAC and fixed parameter SARSA.
Q-PAMDP(1) and Q-PAMDP($\infty$) performed similarly well in terms of goal scoring,
learning policies that score goals roughly 35\% of the time.
In the Platform domain we found that both Q-PAMDP(1) and Q-PAMDP($\infty$) outperformed eNAC and fixed SARSA.
\bibliography{aaai}
\bibliographystyle{aaai}

\end{document}